\def\eqref#1{equation~\ref{#1}}
\def\1{\bm{1}}
\DeclareMathAlphabet{\mathsfit}{\encodingdefault}{\sfdefault}{m}{sl}
\SetMathAlphabet{\mathsfit}{bold}{\encodingdefault}{\sfdefault}{bx}{n}
\newcommand{\E}{\mathbb{E}}
\newcommand{\pa}{\mathsf{pa}}
\newcommand{\N}{\mathcal{N}}
\newcommand{\defeq}{\mathrel{\mathop:}=}
\newcommand{\nei}{\mathsf{ne}}
\newcommand{\s}{\mathsf{s}}
\theoremstyle{plain}
\newtheorem{theorem}{Theorem}
\newtheorem{proposition}[theorem]{Proposition}
\newtheorem{lemma}[theorem]{Lemma}
\newtheorem{corollary}[theorem]{Corollary}
\theoremstyle{definition}
\newtheorem{definition}{Definition}
\newtheorem{assumption}{Assumption}
\newtheorem{condition}[assumption]{Condition}
\theoremstyle{remark}
\definecolor{hcolor}{HTML}{C6E7FF}
\definecolor{tcolor}{HTML}{ddf6c1}
\title{Causally Fair Node Classification on Non-IID Graph Data}
\author{\name Yucong Dai \email yucongd@clemson.edu \\
      \addr Clemson University
      \AND
      \name Lu Zhang \email lz006@uark.edu \\
      \addr University of Arkansas
      \AND
      \name Yaowei Hu \email yaowei.hu@walmart.com \\
      \addr Walmart Inc.
      \AND
      \name Susan Gauch \email sgauch@uark.edu \\
      \addr University of Arkansas
      \AND
      \name Yongkai Wu \email yongkaw@clemson.edu \\
      \addr Clemson University}
\begin{document}

\maketitle

\begin{abstract}
	Fair machine learning seeks to identify and mitigate biases in predictions against unfavorable populations characterized by demographic attributes, such as race and gender.
	Recent research has extended fairness to graph data, such as social networks, but many studies neglect the causal relationships among data instances.
	This paper addresses a prevalent challenge in many fair machine learning research, which typically assumes independent and identically distributed (IID) data, from the causal perspective.
	Specifically, this work targets the circumstance where nodes with different neighborhood structures follow different causal mechanisms, violating the invariance assumptions required for classical structural causal models and $do$-calculus.
	We base our research on the Network Structural Causal Model (NSCM) framework and develop a Message Passing Variational Autoencoder for Causal Inference (MPVA) to compute interventional distributions for causally fair node classification.
	We establish theoretical soundness under two conditions: Decomposability and Graph Independence.
	These conditions formalize when causal mechanism heterogeneity can be overcome by constructing a structural representation that restores invariance and facilitates the computation of interventional distributions using $do$-calculus in non-IID settings.
	Empirical evaluations on semi-synthetic and real-world datasets demonstrate that MPVA outperforms conventional methods by effectively approximating interventional distributions and mitigating bias.
	Our findings demonstrate the potential of causality-based fairness in complex ML applications and motivate future work on relaxing the classic assumptions in algorithmic fairness.
\end{abstract}

\section{Introduction}
As machine learning systems become increasingly common in real-world decision making, identifying and mitigating prediction bias remains essential for equity and reliability in high-stakes settings \citep{caton2020fairness, DBLP:conf/aistats/ZafarVGG17, DBLP:conf/www/ZafarVGG17, DBLP:journals/csur/MehrabiMSLG21, DBLP:journals/csur/PessachS23, DBLP:journals/datamine/Zliobaite17, DBLP:journals/widm/QuyRIZN22, wan2022inprocessing}.
The past decade has produced numerous fair learning algorithms grounded in different notions of fairness, including statistical and causality-based notions \citep{DBLP:conf/sdm/PedreschiRT09, DBLP:conf/nips/HardtPNS16, DBLP:conf/nips/ZhangB18, wen2019fairness,DBLP:conf/ijcai/Wu0W19, DBLP:conf/nips/Wu0WT19,DBLP:conf/iclr/TianWHYSDW025}.

Despite this significant progress, many existing algorithms assume independent and identically distributed (IID) data \citep{caton2020fairness, DBLP:conf/aistats/ZafarVGG17, DBLP:conf/nips/HardtPNS16, DBLP:conf/www/ZafarVGG17}.
In real-world scenarios, however, data instances are rarely independent and often exhibit connections, which are referred to as non-IID settings\footnote{In this paper, the terms `non-IID settings', `graph settings', and `network settings' are used interchangeably and refer to situations where data instances are interconnected.}.
For instance, in loan-default prediction, an individual's repayment behavior may be influenced by the experiences of friends and family.
To address these issues, researchers have extended fairness notions and studied fair machine learning in graphs.
Group fairness notions such as demographic parity, equalized odds, and equal opportunity have been extended to graph settings \citep{DBLP:conf/www/DongLJL22, DBLP:conf/icdm/Fan0X00F21, DBLP:conf/icml/BoseH19, zhu2024fair, luo2024fairgt,DBLP:conf/aaai/WangCDWWPZ25}, and new fairness notions such as degree-related fairness and node-pair distance-based fairness \citep{DBLP:conf/kdd/KangHMT20, DBLP:conf/kdd/DongKTL21} have also been proposed.

However, a significant gap in the current literature on fair graph learning is the lack of principled exploration of causality-based methods.
Causal fairness plays a vital role in the fair machine learning field by modeling unfairness as the causal effect of the sensitive feature on the model prediction rather than relying solely on correlation.
While causality-based fairness has been extensively studied in IID settings, with notions such as direct and indirect discrimination, counterfactual fairness, and path-specific counterfactual fairness being well-established \citep{DBLP:conf/aaai/ZhangB18, DBLP:conf/aaai/Chiappa19, DBLP:conf/aistats/MalinskyS019,DBLP:conf/ijcai/Wu0W19, DBLP:conf/nips/Wu0WT19}, its application in non-IID graph settings remains largely underexplored.
Recent studies have highlighted that directly applying conventional fairness notions to non-IID settings without accounting for dependencies among individuals can yield biased outcomes \citep{zhang2022causal, zhang2023causal}.
Although causal inference for non-IID settings has been studied in the context of interference \citep{hudgens2008causal, tchetgen2012causal}, integrating these methods into machine learning workflows for measuring causal unfairness in non-IID graph data remains challenging, due to computational and estimation barriers, such as the consistent interference assumption \citep{arbour2016inferring, DBLP:conf/aaai/LeeH16, DBLP:conf/uai/ArbourMJ16}.
Preliminary work has formulated causal fairness in non-IID settings \citep{DBLP:conf/aistats/AgarwalZL22, DBLP:conf/uai/AgarwalLZ21, DBLP:conf/wsdm/MaGWYZL22, DBLP:conf/kdd/00020L0JK024}, but these studies do not provide a rigorous foundation for extending traditional inference techniques, such as $do$-calculus, to graph-structured data. A comprehensive and general theoretical framework for causal inference in such settings is still lacking.

In this work, we address a fundamental challenge in extending causal inference to graph data: the heterogeneity of causal mechanisms across nodes. In graph settings, a node's outcome is typically influenced by its own attributes and by the attributes of its neighbors. The aggregation of these influences depends on the local network structure. As a result, nodes with different neighborhood structures effectively follow different causal mechanisms, which violates the invariance assumptions that underlie classical structural causal models (SCMs) and $do$-calculus. To overcome this challenge, our key insight is that a structural representation can restore invariance even when node-level mechanisms vary with local network structure. Building on the Network Structural Causal Model (NSCM), we use the Weisfeiler-Lehman (WL) graph isomorphism framework to encode local structural information through node colors. We then introduce two general conditions, Decomposability and Graph Independence, under which the networked causal process can be reduced to an equivalent causal model with a shared mechanism across nodes. Under these conditions, interventional distributions in non-IID graph data can be expressed using standard $do$-calculus. In particular, we derive a representation of the interventional distribution that separates the effect of the sensitive attribute from network structural variability. This formulation shows that observational and interventional distributions share a common latent functional component, which can be estimated from data and reused under interventions.

Motivated by this analysis, we propose a deep learning framework, the Message Passing Variational Autoencoder for Causal Inference (MPVA), which is designed to approximate this shared component. 
MPVA combines message passing neural networks (MPNNs) to capture neighborhood-level effects with conditional variational autoencoders (cVAEs) to model conditional distributions. 
Rather than directly enforcing fairness constraints, our approach estimates interventional distributions and incorporates them into a causal fairness regularization framework for node classification.

We evaluate our approach on both semi-synthetic and real-world datasets. The results demonstrate that MPVA more accurately estimates interventional quantities in graph settings and achieves improved fairness compared to existing methods that do not account for mechanism heterogeneity.

\section{Related Work}
 {\bf\indent Graph-based Causal Inference}.
Recent work has investigated and relaxed the IID assumption in causal inference.
Several studies extend graphical causal modeling frameworks.
\citet{ogburn2014causal} extended DAGs to represent interference relationships among individuals.
\citet{DBLP:conf/nips/ShermanS18} modeled interference with chain graphs that capture unknown interactions between individuals.
\citet{DBLP:conf/uai/BhattacharyaMS19} proposed an interventional method for estimating causal effects under data dependence when the structure is known.
Beyond graphical modeling, researchers have defined effects that capture relationships among variables and data points.
\citet{DBLP:journals/jmlr/ShpitserP08} defined individual and group average direct and indirect effects, also known as spillover effects, under interference.
\citet{ogburn2014causal} developed direct interference, interference by contagion and infectiousness, and allocational interference.
Recent years have also seen growing interest in causal interference effect estimation.
\citet{hudgens2008causal} and \citet{vanderweele2011effect} developed randomized procedures for unbiased estimands.
\citet{DBLP:conf/icwsm/FatemiZ20} proposed an experimental design approach to minimize interference bias and selection bias during estimation.
\citet{tchetgentchetgen2021autogcomputation} proposed a general g-computation method for causal interference.

{\bf\indent Fairness on Graphs}.
Algorithmic bias in machine learning has received substantial attention, with many methods and empirical studies \citep{DBLP:conf/fat/Binns20, DBLP:conf/ijcnn/JiangDW23, DBLP:conf/icse/VermaR18}.
Existing fairness notions can be grouped into two broad categories.
The first category is \textit{statistical parity}, which requires protected and non-protected groups to receive favorable decisions at similar rates.
The quantitative metrics derived from \textit{statistical parity} include \textit{risk difference}, \textit{risk ratio}, \textit{relative change}, and \textit{odds ratio} \citep{DBLP:conf/www/Wu0W19, DBLP:conf/nips/HardtPNS16, DBLP:conf/sac/PedreschiRT12}.
Recent works \citep{DBLP:conf/uai/AgarwalLZ21, DBLP:conf/icml/BoseH19, DBLP:conf/icml/BuylB20, DBLP:conf/wsdm/DaiW21,DBLP:conf/kdd/DongKTL21, DBLP:conf/kdd/KangHMT20} mitigate bias in node representation learning.
Many of these methods \citep{DBLP:journals/corr/BeutelCZC17, DBLP:conf/aies/ZhangLM18} focus on adversarial learning to prevent learned representations from reliably predicting the associated sensitive attribute.
These works reduce statistical dependence between the sensitive attribute and prediction in the learned representation, but they do not address bias induced by features or graph structure through causal effects.
The second category is counterfactual fairness, which is developed mainly under structural causal models (SCMs) \citep{pearl2009causality}.
Several works \citep{DBLP:conf/wsdm/MaGWYZL22, DBLP:conf/uai/AgarwalLZ21,DBLP:conf/kdd/00020L0JK024} extend counterfactual fairness to graphs.
However, most of these works ignore potential biases introduced by the sensitive attributes of neighboring nodes and by the causal effects of sensitive attributes on other nodes.

\section{Preliminary}

{\bf\noindent Structural Causal Model (SCM)}.
Structural causal models \citep{pearl2009causality} provide a mathematical framework to understand causal relationships within a system.
SCMs define the causal dynamics of a system through a collection of structural equations.
Each variable $X$ in the system is associated with a function $f_X$, such that $x = f_X(\pa_X, u_X)$.
Here, $\pa_X$ denotes the values of endogenous variables that directly influence $X$, and $u_X$ represents the values of exogenous variables affecting $X$.
Each SCM is associated with a causal diagram whose nodes represent variables and whose directed edges represent direct causal relations.
We assume a Markovian causal model in this paper, i.e., all exogenous variables are mutually independent.

{\bf\noindent Causality-based Fairness Notions}.
Defining causality-based fairness notions is facilitated by the $do$-operator \citep{pearl2009causality}, which simulates physical interventions that force variables to take certain values.
Formally, the intervention that sets the value of $S$, a sensitive demographic characteristic (i.e., race or gender), to $s$ is denoted by $do(S=s)$.
The distribution of a variable $X$ after the intervention setting $S$ to $s$ is called the interventional distribution, denoted as $P(x|do(s))\defeq P(x|do(S=s))$.
Causality-based fairness notions are usually defined by disparities in interventional distributions across demographic groups, such as the total effect \citep{DBLP:conf/aaai/ZhangB18}, direct discrimination \citep{DBLP:conf/ijcai/ZhangWW17}, indirect discrimination \citep{DBLP:conf/ijcai/ZhangWW17}, and counterfactual fairness \citep{DBLP:conf/nips/KusnerLRS17}.
In this paper, we consider the total effect of $S$ on $Y$, defined as $\E[Y|do(s^+)] - \E[Y|do(s^-)]$, where $s^+$ and $s^-$ represent the favorable and unfavorable demographic groups.

\section{Problem Formulation}

\subsection{Network Structural Causal Model (NSCM)}

\begin{figure}[tbp]\small
	\centering
	\begin{minipage}{0.24\textwidth}
		\begin{subfigure}[b]{\textwidth}
			\centering
			\begin{tikzpicture}[scale=1.2]
				\tikzstyle{vertex} = [thick, circle, draw, inner sep=0pt, minimum size = 8mm]
				\tikzstyle{do} = [thick, circle, draw, inner sep=0pt, rectangle, minimum size=4mm]
				\tikzstyle{edge} = [thick, -latex]
				\node[vertex] at (0, 0) (si) {$S^{(i)}$};
				\node[vertex] at (2, 0) (xi) {$X^{(i)}$};

				\node[vertex] at (0, -1) (sj) {$S^{(j)}$};
				\node[vertex] at (2, -1) (xj) {$X^{(j)}$};

				\node[vertex] at (0, -2) (sk) {$S^{(k)}$};
				\node[vertex] at (2, -2) (xk) {$X^{(k)}$};
				\draw[edge] (si) to (xi);
				\draw[edge] (sj) to (xj);
				\draw[edge] (sk) to (xk);
				\draw[edge] (si) to (xj);
				\draw[edge] (sj) to (xi);
				\draw[edge] (sk) to (xj);
				\draw[edge] (sj) to (xk);
			\end{tikzpicture}
			\caption{}
			\label{fig:interference_graph}
		\end{subfigure}
		\begin{subfigure}[b]{\textwidth}
			\centering
			\begin{tikzpicture}[scale=1]
				\tikzstyle{vertex} = [thick, circle, draw, inner sep=0pt, minimum size = 7mm]
				\tikzstyle{edge} = [thick, -]
				\node[vertex] at (1,0.5) (j) {$j$};
				\node[vertex] at (0,0) (i) {$i$};
				\node[vertex] at (2,0) (k) {$k$};
				\draw[edge] (i) to (j);
				\draw[edge] (j) to (k);
			\end{tikzpicture}
			\caption{}
			\label{fig:network_topology}
		\end{subfigure}
	\end{minipage}%
	\hspace{0.1\textwidth}
	\begin{minipage}{0.23\textwidth}
		\begin{subfigure}[b]{\textwidth}
			\centering
			\begin{tikzpicture}[scale=1]
				\tikzstyle{vertex} = [thick, circle, draw, inner sep=0pt, minimum size = 7mm]
				\tikzstyle{edge} = [thick, -latex]
				\node[vertex] at (0,0) (S) {$S$};
				\node[vertex] at (2,0) (X) {$X$};
				\draw[edge] (S) to (X);
			\end{tikzpicture}
			\caption{}
			\label{fig:causal_diagram}
		\end{subfigure}
		\vspace{1em}
		\begin{subfigure}[b]{\textwidth}
			\centering
			\begin{tikzpicture}[scale=1]
				\tikzstyle{vertex} = [thick, circle, draw, inner sep=0pt, minimum size = 7mm]
				\tikzstyle{edge} = [thick, -latex]
				\node[vertex] at (0,0) (S) {$S$};
				\node[vertex] at (2,0) (X) {$X$};
				\draw[-latex,line width=1pt,double distance=1pt] (S) to (X);
			\end{tikzpicture}
			\caption{}
			\label{fig:networked_causal_diagram}
		\end{subfigure}
		\vspace{1em}
		\begin{subfigure}[b]{\textwidth}
			\centering
			\begin{tikzpicture}[scale=1.5]
				\tikzstyle{vertex} = [thick, circle, draw, inner sep=0pt, minimum size = 7mm]
				\tikzstyle{edge} = [thick, -latex]
				\node[vertex] at (0,0) (S) {$S$};
				\node[vertex] at (1,0) (X) {$X$};
				\node[vertex] at (2,0) (Y) {$Y$};
				\draw[edge] (X) to (Y);
				\draw[-latex,line width=1pt,double distance=1pt] (S) to (X);
			\end{tikzpicture}
			\caption{}
			\label{fig:node_classification_diagram}
		\end{subfigure}
	\end{minipage}%
	\hspace{0.1\textwidth}
	\begin{minipage}{0.3\textwidth}
		\begin{subfigure}[b]{\textwidth}
			\centering
			\begin{tikzpicture}[scale=1.2]
				\tikzstyle{vertex} = [thick, circle, draw, inner sep=0pt, minimum size = 7mm]
				\tikzstyle{edge} = [thick, -latex]
				\node[vertex] at (0,0) (S) {$\mathsf{S}$};
				\node[vertex] at (0.5,1) (C) {$C$};
				\node[vertex, dashed] at (1,0) (A) {$A$};
				\node[vertex, dashed] at (2,1) (UX) {$U_X$};
				\node[vertex] at (2,0) (X) {$X$};
				\node[vertex] at (3,0) (Y) {$Y$};
				\draw[edge, latex-latex, dashed] (C) to (S);
				\draw[edge] (C) to (A);
				\draw[edge] (S) to (A);
				\draw[edge] (A) to (X);
				\draw[edge] (UX) to (X);
				\draw[edge] (X) to (Y);
			\end{tikzpicture}
			\caption{}
			\label{fig:reduced_causal_diagram}
		\end{subfigure}
		\begin{subfigure}[b]{\textwidth}
			\centering
			\begin{tikzpicture}[scale=1.2]
				\tikzstyle{vertex} = [thick, circle, draw, inner sep=0pt, minimum size = 7mm]
				\tikzstyle{edge} = [thick, -latex]
				\node[vertex] at (0,0) (S) {$\mathsf{S}$};
				\node[vertex] at (0.5,1) (C) {$C$};
				\node[vertex] at (2,0) (X) {$X$};
				\node[vertex] at (3,0) (Y) {$Y$};
				\draw[edge, latex-latex, dashed] (C) to (S);
				\draw[edge] (C) to (X);
				\draw[edge, latex-latex, dashed, bend left=45] (C) to (X);
				\draw[edge] (S) to (X);
				\draw[edge] (X) to (Y);
			\end{tikzpicture}
			\caption{}
			\label{fig:graph_independence_violation}
		\end{subfigure}
	\end{minipage}
	\caption{Graphs and diagrams. (a) The interference graph. (b) The network $\mathcal{G}$. (c) The causal diagram $\mathcal{C}$. (d) The networked causal diagram $\mathcal{N}$. (e) The networked causal diagram for node classification. (f) The causal diagram that is equivalent to the networked causal diagram in Figure~\ref{fig:node_classification_diagram}.
	(g) The causal graph that violates Graph Independence (Condition~\ref{cond:graph_independence}).}
\end{figure}

Traditional SCMs assume that data instances are IID.
To handle non-IID data, recent studies have extended SCMs to capture interference between individuals (e.g., \citep{ogburn2022causal}).
These extensions usually use interference graphs, as illustrated in Figure~\ref{fig:interference_graph}, to describe both causal relationships between features and interference relationships between individuals.
Motivated by this line of work, we adopt a more general framework that extends traditional SCMs, which we refer to as the Network Structural Causal Model (NSCM). In an NSCM, we combine the network, which describes potential interference, with the graph, which describes causal relationships between features. To distinguish these relationships,
an NSCM explicitly considers two components: 1) a network ($\mathcal{G}$), where each node represents an individual or instance and each edge represents potential interference between connected individuals; and 2) a causal diagram ($\mathcal{C}$), where each node represents a feature and each edge represents a parent-child relationship between features determined by the structural equations.
It is formally defined as follows.

\begin{definition}[Network Structural Causal Model (NSCM)]\label{def:nscm}
	An NSCM $\mathcal{M}$ is a quadruple $\mathcal{M} = \langle \mathcal{G}, \mathbf{U},\mathbf{V},\mathbf{F} \rangle$ where
	\begin{enumerate}[itemsep=0pt]
		\item $\mathcal{G}$ is a network that consists of a set of connected nodes.
		\item $\mathbf{U}$ is a set of exogenous variables.
		      For every node $i$ in the graph, $\mathbf{u}^{(i)}\in \mathbf{U}$ is an instantiation of the exogenous variables for node $i$.
		\item $\mathbf{V}$ is a set of endogenous variables.
		      For every node $i$ in the graph, $\mathbf{v}^{(i)}\in \mathbf{V}$ is an instantiation of the endogenous variables for node $i$.
		\item $\mathbf{F}$ is a set of structural equations. 
		      For each variable $X\in \mathbf{V}$ and node $i$, an equation $x^{(i)} = f(\pa_X^{(i)}, \{\pa_X^{(j)}:j\in \nei^{1:k}(i)\},u_X^{(i)})$ determines the value of $x^{(i)}$, where $\nei^{1:k}(i)$ denotes the neighborhood of $i$ within $k$ hops.
	\end{enumerate}
\end{definition}

An illustrative example of an NSCM with two variables $S,X$ is shown in Figures~\ref{fig:interference_graph}, \ref{fig:network_topology}, and \ref{fig:causal_diagram}, which show the interference graph, the network $\mathcal{G}$, and the causal diagram $\mathcal{C}$.
In this example, the structural equation of this NSCM is
\begin{equation}\label{eq:nscm_structural_equation}
	x^{(i)} = f(s^{(i)}, \{s^{(j)}:j\in \nei^{1:k}(i)\}, u_X^{(i)})
\end{equation}

To further simplify representation, we introduce a hybrid graph--the networked causal diagram $\mathcal{N}$--that integrates network information with the causal diagram while omitting detailed interference structure, as shown in Figure~\ref{fig:networked_causal_diagram}.
In this networked causal diagram, the solid line arrow represents the case where the causal effect is transmitted only from a variable of an individual to another variable of the same individual as seen in the traditional SCM (not shown in this example), while the double line arrow represents the existence of interference between different individuals in $\mathcal{G}$ when the causal effect is transmitted from one variable to another.
For $k=1$, the interference solely occurs between nodes that are immediate neighbors, while for $k>1$, the interference can propagate through multiple hops in a message-passing fashion.

Given the NSCM formulation, the causal inference task is to infer the interventional distribution $P(x|do(s))\! \defeq \! P(x|do(S=s))$ from observational graph data.

\subsection{Fair Node Classification}

We apply our causal inference techniques to fair node classification as the downstream task.
Denote the sensitive feature by $S$, the non-sensitive feature by $X$, and the decision by $Y$.
We consider a general networked causal diagram as shown in Figure~\ref{fig:node_classification_diagram}, where, for simplicity, we posit that interference only exists from $S$ to $X$.
However, our methods can be used to handle interference between any pair of variables.
Suppose that we are given a dataset $\mathcal{D}=\{s^{(i)},x^{(i)},y^{(i)}\}_{i=1}^{K}$ and a network $\N$ that connects individuals in $\mathcal{D}$ to reflect interference.
The goal is to build a classifier $h:X \mapsto Y$ that predicts the label.
We say that the classifier is causally fair if $\E[h(x)|do(s^+)] = \E[h(x)|do(s^-)]$, where $do(\cdot)$ represents the intervention performed under the networked causal diagram.

\section{Method}

\subsection{Causal Inference on Network Data}
\label{sec:causal_inference_network}

We use the networked causal diagram in Figure~\ref{fig:node_classification_diagram} and the task of inferring the causal effect from $S$ to $X$, i.e., computing $P(x|do(s))$, as a running example.
The $do$-calculus is an axiomatic system widely used to solve causal inference problems \citep[Ch.~3.4]{pearl2009causality}.
However, as shown in recent studies \citep{zhang2022causal, zhang2023causal}, directly applying $do$-calculus in non-IID settings can lead to biased results.
The fundamental challenge is that, in networked data, the structural equation associated with each node depends on its local neighborhood. Unlike standard structural equations in classical SCMs, Equation~\ref{eq:nscm_structural_equation} incorporates neighborhood information, so nodes with different local network structures follow different effective causal mechanisms. As a result, the invariance assumptions required for the validity of $do$-calculus no longer hold. This section extends the intervention $do$-operator and symbolic notation to NSCMs by identifying conditions under which a shared causal mechanism can be recovered despite neighborhood-dependent interactions.

Let $\s^{(i)} \defeq \{\, s^{(i)}, \{s^{(j)} : j \in \nei^{1:k}(i)\} \,\}$ denote the multiset of sensitive attributes associated with node $i$ and its $k$-hop neighbors. This multiset captures all neighborhood information of $S$ that influences node $i$, including node $i$ itself.
We define the intervention $do(\s^{(i)} = s)$ as setting the sensitive attribute of every node in $\s^{(i)}$ to $s$. Let $do(\s = s)$ denote the global intervention that applies $do(\s^{(i)} = s)$ simultaneously to all nodes $i$.
Under this global intervention, the interventional distribution $P(x|do(\s)=s)$ coincides with $P(x|do(s))$ in the graph setting, which is the target quantity. The following discussion focuses on computing $P(x|do(\s)=s)$.

The key challenge in computing $P(x \mid do(\s = s))$ arises from the heterogeneity of node-level causal mechanisms induced by varying local network structures. 
To address this, we aim to decouple the causal effect of $S$ from the structural variability introduced by neighborhood interactions. 
To this end, we leverage the notion of node colors from the Weisfeiler-Lehman (WL) graph isomorphism test to encode local structural information. 
The node color serves as a representation of the local computation tree of each node. 
In particular, two nodes are assigned the same color if and only if they have identical computation trees under the WL procedure. 
Let $c^{(i)}$ denote the color of node $i$ obtained from the WL algorithm with identical initial node colors. 
The following result follows from \citep{DBLP:journals/corr/abs-2204-07697}:

\begin{lemma}[\citep{DBLP:journals/corr/abs-2204-07697}]\label{lem:wl_color_equivalence}
	For two different nodes $i,j$, $c^{(i)}=c^{(j)}$ if and only if nodes $i$ and $j$ have identical computation trees in the WL graph isomorphism test.
\end{lemma}

Based on this representation, we introduce conditions under which the heterogeneous node-level mechanisms can be reduced to a shared functional form.

\begin{condition}[Decomposability]\label{cond:decomposability}
	The structural equation in the NSCM can be decomposed into a message-passing mechanism that aggregates neighborhood information and an internal mechanism that governs the node-level causal effect.
\end{condition}

Under this condition, the structural equation $x^{(i)} = f(s^{(i)}, \{s^{(j)}:j\in \nei^{1:k}(i)\},u_X^{(i)})$ can be decomposed into two equations:
\begin{equation*}
	a^{(i)} = f^{\textit{MP}}(\s^{(i)},c^{(i)}), \qquad x^{(i)} = f^{\textit{INT}}(a^{(i)}, u_X^{(i)})
\end{equation*}
where $f^{\textit{MP}}$ represents the message-passing mechanism, $a^{(i)}$ is an intermediate variable summarizing neighborhood effects, and $f^{\textit{INT}}$ represents the internal node-level causal mechanism.

Combining Lemma~\ref{lem:wl_color_equivalence} and Condition~\ref{cond:decomposability}, we obtain the following result, which characterizes when the aggregated neighborhood effect can be represented by a shared mapping.

\begin{proposition}\label{prop:shared_aggregation}
	For any two nodes $i,j$, if $\{\s^{(i)},c^{(i)}\}=\{\s^{(j)},c^{(j)}\}$, then we have $a^{(i)}=a^{(j)}$.
\end{proposition}

\begin{proof}
	By Lemma~\ref{lem:wl_color_equivalence}, nodes $i$ and $j$ have identical computation trees. Under Condition~\ref{cond:decomposability}, the message-passing function $f^{\textit{MP}}$ operates over these computation trees. Therefore, identical inputs and identical structural contexts imply identical outputs, i.e., $a^{(i)} = a^{(j)}$.
\end{proof}

\begin{condition}[Graph Independence]\label{cond:graph_independence}
	The exogenous variable $U_X$ is independent of node color $C$, i.e., $U_X \perp C$.
\end{condition}

The Graph Independence condition ensures that the exogenous variation affecting $X$ does not depend on the network structure. Consequently, the intermediate variable $A$ provides a sufficient summary of all neighborhood-dependent effects relevant to $X$ under intervention.

Building on these results, we now show that, under Conditions~\ref{cond:decomposability} and~\ref{cond:graph_independence}, the NSCM can be reduced to an equivalent causal model that admits standard causal inference via $do$-calculus.
Under Condition~\ref{cond:decomposability}, the neighborhood-dependent structural equation can be rewritten by dropping the node index as
\[
a = g(\s, c), \qquad x = f^{\textit{INT}}(a, u_X),
\]
where $c$ denotes the node color encoding local network structure, and $g$ is a deterministic mapping induced by the message-passing mechanism $f^{\textit{MP}}$. This reduction transforms the original neighborhood-dependent mechanism into a two-stage process where $(\mathsf{S}, C)$ captures all structural variability from the network through $A$, and the downstream mechanism $f^{\textit{INT}}$ is shared across nodes. Thus, Condition~\ref{cond:decomposability} recovers a shared functional form for the causal mechanism of $X$.

Given this representation, the resulting causal model consists of variables $(\mathsf{S}, C, A, X)$ with structural equations
\[
c \sim P(C), \qquad \s \sim P(\mathsf{S} \mid C), \qquad a = g(\s, c), \qquad x = f^{\textit{INT}}(a, u_X),
\]
where the dependence between $\mathsf{S}$ and $C$ allows potential confounding induced by the network structure. These structural equations correspond to the reduced causal diagram in Figure~\ref{fig:reduced_causal_diagram}, which is a standard causal diagram without explicit interference between instances.

Condition~\ref{cond:graph_independence} plays a critical role in identifiability because the assumption $U_X \perp C$ guarantees that all graph-dependent effects on $X$ are mediated through $A$. This assumption prevents additional confounding paths from $C$ to $X$ through the exogenous variable. In the reduced causal diagram, the only backdoor path from $\mathsf{S}$ to $X$ is
$\mathsf{S} \leftrightarrow C \rightarrow A \rightarrow X$, 
which arises from the dependence between $\mathsf{S}$ and the structural variable $C$. Conditioning on $C$ blocks this backdoor path, so $C$ is a valid adjustment set.

We then examine the case where Condition~\ref{cond:graph_independence} is violated. 
In this setting, the dependence between $U_X$ and $C$ induces latent confounding between $C$ and $X$, which can be represented as a bidirected edge $C \leftrightarrow X$ in the causal diagram.
To analyze identifiability, we consider the latent projection of the model onto the observed variables $\{\mathsf{S}, C, X\}$ by marginalizing out the latent variables $A$ and $U_X$ (see Figure~\ref{fig:graph_independence_violation}). In the resulting graph, the directed edges reduce to $\mathsf{S} \rightarrow X$ and $C \rightarrow X$, while latent confounding induces bidirected edges $\mathsf{S} \leftrightarrow C$ and $C \leftrightarrow X$. Consequently, the variables $\mathsf{S}$, $C$, and $X$ belong to a single c-component.
Under this structure, the Hedge Criterion~\citep{DBLP:journals/jmlr/ShpitserP08} can be applied to assess identifiability. In particular, consider the two c-forests: 
(i) the larger forest $F' = \{\mathsf{S}, C, X\}$, which is rooted at $X$ and contains the treatment variable $\mathsf{S}$, and 
(ii) the smaller forest $F = \{C, X\}$, which is also rooted at $X$ but excludes $\mathsf{S}$. Since $F \subset F'$ and both forests share the same root, the hedge criterion is satisfied.
By the completeness of the hedge criterion, this structure implies that the causal effect $P(x \mid do(\s)=s)$ is not identifiable from observational data in general.

As a result, we have the following proposition.

\begin{proposition}
\label{prop:identifiability}
Under Conditions~\ref{cond:decomposability} and~\ref{cond:graph_independence}, the causal effect $P(x \mid do(\s)=s)$ is identifiable via adjustment on $C$, provided that $C$ blocks all backdoor paths from $\mathsf{S}$ to $X$ and $P(\mathsf{S}=s \mid C=c) > 0$ for all $c$ with $P(C=c) > 0$.
\end{proposition}

\subsection{From Identification to Estimation}

While Section~\ref{sec:causal_inference_network} establishes that the causal effect $P(x \mid do(\s)=s)$ is identifiable via adjustment on the structural variable $C$, directly estimating the conditional distribution $P(x \mid \s,c)$ for the adjustment remains challenging in practice. In particular, the node color $C$ encodes fine-grained local graph structure and may take a large number of distinct values, which leads to high-dimensional and sparsely supported conditioning.

To address this issue, we leverage the decomposable structure implied by Condition~\ref{cond:decomposability}. Specifically, under this condition, the effect of $(\mathsf{S},C)$ on $X$ is mediated through an intermediate variable $A = g(\mathsf{S},C)$, which provides a compact representation of neighborhood influence. This structure suggests replacing direct conditioning on $(\mathsf{S},C)$ with a learned representation $A$ that captures the structural information relevant to predicting $X$.

Motivated by this observation, we derive a representation of the interventional distribution that separates structural variability from the downstream causal mechanism. This formulation provides a principled route for estimating $P(x \mid do(\s)=s)$ and directly informs our model architecture, as described below.

\begin{theorem}
\label{thm:main}
Suppose Conditions~\ref{cond:decomposability} and~\ref{cond:graph_independence} hold. Then the interventional distribution of $X$ under $do(\s)=s$ admits the following representation:
\begin{equation}
P(x \mid do(\s)=s)
=
\sum_{c} P(c)\sum_{a} P(x \mid a)\, P\!\left(g(\s,c)=a\right),
\label{eq:thm_main}
\end{equation}
where $A = g(\mathsf{S},C)$ denotes the intermediate variable induced by the message-passing mechanism.

\end{theorem}

Theorem~\ref{thm:main} provides a representation of the interventional distribution through the intermediate variable $A=g(\mathsf{S},C)$. To motivate estimation, we next relate this expression to the observational distribution.

By marginalizing over $S$, $C$, and $A$, the observational distribution of $X$ can be written as
\begin{equation}\label{eq:observational_distribution}
	\begin{split}
		P(x) & = \sum_{\s,c,a}
		P(\s,c) P(g(\s,c)=a) P(x\mid a) = \sum_{\s,c} P(\s,c) \sum_{a} P(x\mid a) P(g(\s,c)=a).
	\end{split}
\end{equation}
Comparing Equation~\ref{eq:thm_main} from Theorem~\ref{thm:main} with Equation~\ref{eq:observational_distribution}, we observe that both the interventional and observational distributions share the same inner quantity
\(
Q(x;\s,c)
\defeq
\sum_{a} P(x\mid a)\,P\!\left(g(\s,c)=a\right).
\)
The key implication is that the observational and interventional distributions differ only in how the shared component $Q(x;\s,c)$ is averaged. In the observational setting, $Q(x;\s,c)$ is averaged with respect to the joint distribution $P(\s,c)$. Under intervention, $\s$ is fixed, and the averaging is performed only over the structural variable $C$ according to $P(c)$.

This observation admits a natural Monte Carlo interpretation. Suppose a model is learned to approximate the shared component $Q(x;\s,c)$. The observational distribution $P(x)$ can then be estimated by sampling pairs $(\s,c)\sim P(\s,c)$ and averaging the resulting values of $Q(x;\s,c)$. The interventional distribution $P(x\mid do(\s)=s)$ can be estimated by fixing $\s$, sampling $c\sim P(c)$, and averaging the same quantity $Q(x;\s,c)$. Once the shared component is learned, intervention amounts to reweighting or resampling rather than learning a new model.

Based on this observation, we state the following estimation principle.

\begin{corollary}
\label{cor:estimation}
Suppose a model $Q_\theta(x;\s,c)$ is learned from observational data to approximate the shared component $\sum_{a} P(x\mid a)\,P\!\left(g(\s,c)=a\right)$ in Equation~\ref{eq:observational_distribution}. 
Assume further that the learned mapping remains valid under interventions on $\mathsf{S}$ in the sense that the intervention changes only the weighting over $(\s,c)$, while preserving the structural mechanism encoded by $Q_\theta$. Then the interventional distribution can be approximated by
\begin{equation}
P(x\mid do(\s)=s)
\approx
\sum_c P(c)\,Q_\theta(x;\s,c)
=
\mathbb{E}_{c\sim P(c)}[Q_\theta(x;\s,c)].
\label{eq:cor_estimation}
\end{equation}
\end{corollary}

Corollary~\ref{cor:estimation} does not require explicitly recovering the true latent values of $A$. Instead, it suggests learning a representation that captures the shared functional component relating $(\mathsf{S},C)$ to $X$ and reusing this representation under intervention through a modified averaging scheme. This result provides a practical route for estimating interventional distributions while remaining consistent with the decomposable structure in Condition~\ref{cond:decomposability}.

Motivated by this principle, we design the Message Passing Variational Autoencoder (MPVA) for causal inference in networks, which combines a message-passing neural network (MPNN) with a conditional variational autoencoder (cVAE), as detailed in the next subsection.

\subsection{Message Passing Variational Autoencoder for Causal Inference (MPVA)}

We develop the MPVA framework to directly implement the estimation principle established in Theorem~\ref{thm:main}.
The architecture is designed to approximate the shared component $Q(x;\s,c)$ between observational and interventional distributions, which allows interventional distributions to be computed through reweighting.
For generality, we further consider the existence of independent variables $Z$ other than $S$
that directly affect $X$, and Corollary~\ref{cor:estimation} readily applies to this case.
In this framework, we first use an MPNN to learn the intermediate representation $A$ under Condition~\ref{cond:decomposability}, which captures the aggregated causal effect of $S$ on $X$ from each node's neighborhood through the message-passing mechanism $g(\s,c)$, denoted as $\hat{a}=\hat{g}^{\textit{MPNN}}(\s)$.
Then, we use the estimated intermediate representation $\hat{A}$ along with the variables $Z$ as inputs to a multilayer perceptron (MLP) to predict the node feature $X$, denoted as $\hat{x}=\textit{MLP}(\hat{a},z)$.
We then use a cVAE to learn the conditional distribution $P(x|\hat{a},z)$, which corresponds to $P(x|a)$ in Theorem~\ref{thm:main}. The encoder $v=\textit{EN}(x,\hat{a},z)$ takes $x$ as input with $\hat{a},z$ as conditions, and the decoder $\hat{x}=\textit{DE}(v,\hat{a},z)$ reconstructs $x$.
The architecture of MPVA is illustrated in Figure~\ref{fig:mpva}.

In the training phase, we first train the MPNN and MLP with the dataset $\mathcal{D}$ and network $\mathcal{N}$.
Note that this process does not require actual values of $A$ as supervised signals.
Then, we freeze the parameters of MPNN and MLP and train the cVAE.
In the inference phase, for each node $i$, we first use the MPNN to compute $\hat{a}^{(i)}=\hat{g}^{\textit{MPNN}}(\s^{(i)})$ and use the encoder of the cVAE to compute $v^{(i)}=\textit{EN}(x^{(i)},\hat{a}^{(i)},z^{(i)})$.
Then, we perform the intervention $do(\s)=s$ to change the value of $S$ to $s$ for all nodes.
Next, we use the MPNN to recompute $A$ under the intervention, i.e., $\tilde{a}_{s}=\hat{g}^{\textit{MPNN}}(do(\s^{(i)})=s)$, and use the cVAE to reconstruct $X$ from $\tilde{a}_{s}$ and $v^{(i)}$, i.e., $\tilde{x}^{(i)}_{s}=\textit{DE}(v^{(i)},\tilde{a}_{s},z^{(i)})$.
Thus, $\tilde{x}^{(i)}_{s}$ represents the estimated outcome for node $i$ under the population-level intervention $do(\s)=s$.
This inference process follows an Abduction-Action-Prediction structure: it encodes the observed outcome (abduction), computes the intervention effect on the intermediate representation (action), and decodes to obtain the interventional outcome (prediction).
Critically, Conditions~\ref{cond:decomposability} and~\ref{cond:graph_independence} guarantee the identifiability of these interventional distributions, as established in Section~\ref{sec:causal_inference_network}.

\begin{figure}[h!]
	\centering    \includegraphics[width=0.75\linewidth]{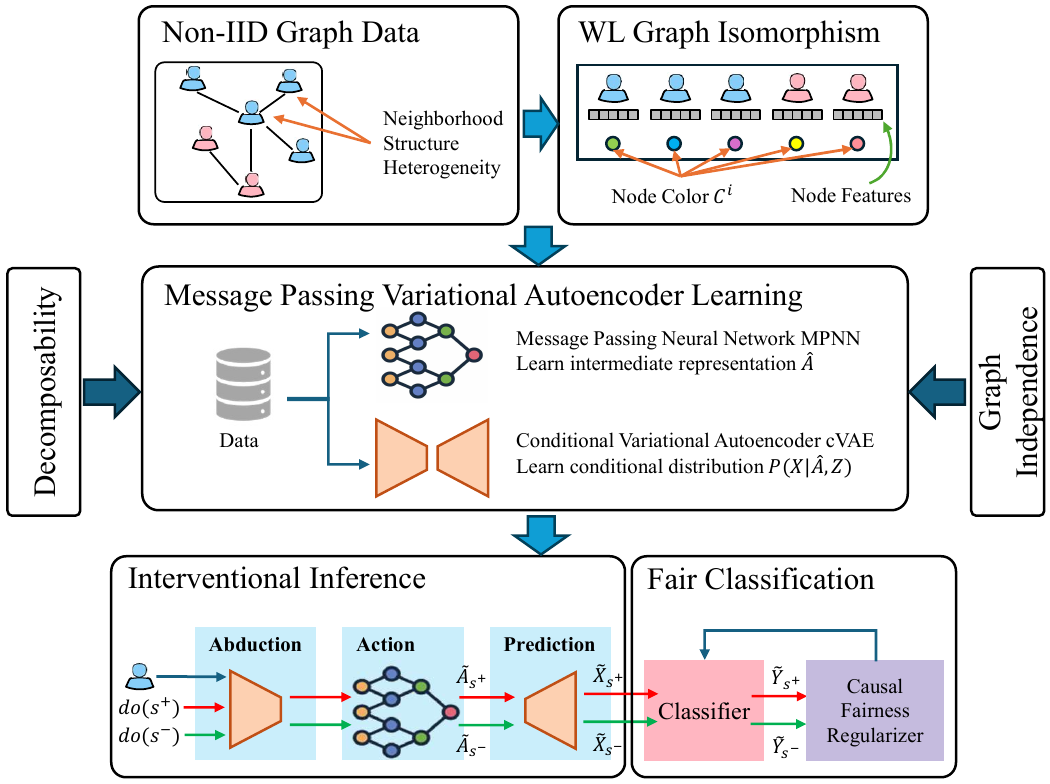}
	\caption{Overview of the MPVA framework. The MPNN component learns node-level representations by aggregating causal influences from neighboring nodes. The cVAE models the conditional distribution necessary to estimate interventional outcomes. Together, the MPNN and cVAE enable inference of the interventional distribution of $Y$ and act as a regularizer to improve causally fair classification.}
	\label{fig:mpva}
\end{figure}

\subsection{Causally Fair Node Classification}
Having established a method for estimating interventional distributions, we now apply it to causally fair node classification.
We formulate causally fair node classification as a regularized optimization problem that explicitly minimizes the causal effect of sensitive attributes on predictions.
To this end, we use the interventional distributions estimated by MPVA to define a causal fairness regularization term, which is added to the standard classification loss.
Specifically, we generate interventional outcomes using the classifier $h$ and the learned MPVA, denoted by $\tilde{y}_{s^+}$ and $\tilde{y}_{s^-}$, where $\tilde{y}_{s}=h(\tilde{x})|do(s)$.
For the fair classification task, we construct a regularization term that minimizes the causal discrepancy between two interventional variants:
\begin{align}
	 & \ell_f = \E[h(\tilde{x})|do(s^+)]-\E[h(\tilde{x})|do(s^-)] \nonumber \\ = &\E[\mathds{1}_{\tilde{y}_{s^+}=1}]-\E[\mathds{1}_{\tilde{y}_{s^-}=1}] = \E[\mathds{1}_{\tilde{y}_{s^+}=1}]+\E[\mathds{1}_{\tilde{y}_{s^-}=-1}] - 1, \label{eq:gcf_regularization}
\end{align}
where $\mathds{1}$ is the indicator function.
Following \citep{DBLP:conf/www/Wu0W19}, the indicator function can be further replaced with the differentiable surrogate function $u(\cdot)$.
It is noteworthy that this differentiability allows the regularization term to be incorporated into the classic loss functions used for training the node classifier.
Thus, this regularization term can be seamlessly incorporated into the classification loss:
\(
\ell = \frac{1}{K}\sum_{i\in [K]}\ell_c(h(x^{(i)}), y^{(i)}) + \lambda \ell_f,
\)
where $\ell_c$ is the empirical loss function and $\lambda$ is a hyperparameter that balances model performance and causal fairness.

\section{Experiments}

We evaluate the proposed method and baselines on semi-synthetic and real-world graphs.
To support reproducibility, we provide implementation details and experimental settings in the appendix.
The complete source code for MPVA, including the Message Passing Variational Autoencoder architecture and training procedures, is available at a public repository\footnote{\url{http://tiny.cc/mpva}}.
All hyperparameter settings, network architectures, and optimization details are specified in the appendix.
All baseline implementations follow their original papers and publicly available code. We report means and standard deviations across five independent runs.

\subsection{Datasets}
In our experiments, we use both semi-synthetic datasets, which allow full control over the data-generation process, and real-world datasets, which evaluate external generalizability.
Semi-synthetic data are commonly used in causal inference and fair machine learning because ground-truth quantities, such as causal effects and bias, cannot be directly observed in real-world settings.
To create semi-synthetic datasets, we adapt the Credit Dataset~\citep{misc_default_of_credit_card_clients_350} by introducing network structures and causal relationships using the Network Structural Causal Model.
This approach allows us to precisely control data generation and accurately derive ground-truth interventional distributions for arbitrary interventions on the sensitive attribute.
Specifically, we generate two semi-synthetic datasets, denoted as D1 and D2, for evaluation purposes.
The synthetic data generation process, which follows the Network Structural Causal Model, is fully described in the appendix.
We also conduct experiments on widely used real-world datasets, namely Credit Defaulter \citep{misc_default_of_credit_card_clients_350} and German \citep{german_credit_data}.
For real-world datasets (Credit and German), we provide detailed preprocessing steps and train/validation/test splits in the supplementary code repository.
Since the underlying mechanisms of the real-world datasets are unknown, we evaluate our framework's ability to estimate non-IID causal interventions and compare it against baseline methods, including IID-based causal fairness approaches.
We use the learned MPVA model to measure the interventional quantities and evaluate the performance in terms of non-IID causal fairness.
The detailed statistics of these datasets are included in the appendix, including the number of nodes, the number of edges, and the feature dimension.

\subsection{Experiment Settings}
\textbf{Fairness Metrics:}
We evaluate the performance of the proposed framework in terms of prediction accuracy and fairness.
For non-causal fairness notions, we use demographic parity, a widely used fairness notion, to evaluate group-level fairness.
Demographic parity requires classifier decisions to be independent of the sensitive attribute.
It is usually quantified by \textit{risk difference} (\textbf{RD}), i.e., the difference in positive prediction rates between the favorable and unfavorable groups.
It can be expressed as $\big|\E_{X|S=s^+}[\hat{Y}] - \E_{X|S=s^-}[\hat{Y}] \big|$.
For causal fairness notions, we consider both IID and non-IID, i.e., graph-based, causal fairness notions.
We denote the IID causal fairness as \textbf{CF} whose calculation and estimation approaches are described in the appendix.
We denote the graph-based causal fairness notion by \textbf{gCF}, as described in Equation~\ref{eq:gcf_regularization}.

\textbf{Mitigation Baselines:}
We compare the proposed framework \textbf{MPVA} with several state-of-the-art non-IID bias mitigation methods and the conventional IID constraint-based methods.
\textbf{GCN-RD} and \textbf{GCN-IID} are conventional Graph Convolutional Networks (GCNs) with risk-difference and IID causal fairness constraints, respectively. The constraint formulation is described in the appendix.
\textbf{FairGNN} \citep{DBLP:conf/wsdm/DaiW21} uses a covariance-based adversarial discriminator to predict the sensitive attribute from the conventional GNN node classifier.
\textbf{GEAR} \citep{DBLP:conf/wsdm/MaGWYZL22} uses a variational autoencoder to synthesize counterfactual samples and achieve counterfactual fairness for node classification.
\textbf{NIFTY} \citep{DBLP:conf/uai/AgarwalLZ21} enhances fairness and stability in GNNs by introducing a novel objective function and layer-wise weight normalization based on the Lipschitz constant.
\textbf{FairINV} \citep{DBLP:conf/kdd/ZhuLBZC24} trains fair GNNs by eliminating spurious correlations between labels and sensitive attributes within a single training session.
Implementation details appear in Section~\ref{sec:implementation} in the appendix.

\begin{table*}[h!]
	\small
	\centering
	\caption{Fairness measurement of conventional GCN using various metrics on semi-synthetic datasets.}
	\label{tab:semi_synthetic_gcn_fairness}
	\begin{NiceTabular}{lccccc}
		\CodeBefore
		\columncolor{hcolor}{5}
		\columncolor{tcolor}{6}
		\Body
		\toprule\toprule
		{Data}            & {Acc}                 & {RD}                  & {CF}                   & {gCF}                 & {True gCF}             \\
		\midrule
		Semi-synthetic D1 & $0.9674_{\pm 0.0017}$ & $0.0580_{\pm 0.0015}$ & $0.0319_{\pm 0.0001}$  & $0.1792_{\pm 0.0344}$ & $0.1960_{\pm 0.0338}$  \\
		Semi-synthetic D2 & $0.9715_{\pm 0.0011}$ & $0.0832_{\pm 0.0013}$ & $0.0461_{\pm  0.0003}$ & $0.6721_{\pm 0.0132}$ & $ 0.6884_{\pm 0.0084}$ \\
		\bottomrule
	\end{NiceTabular}
\end{table*}

\begin{table*}[h!]
	\small
	\centering
	\caption{Evaluation of mitigation methods on semi-synthetic datasets.}
	\label{tab:semi_synthetic_mitigation}
	\begin{NiceTabular}{c|c|cccc}
		\CodeBefore
		\columncolor{hcolor}{5}
		\columncolor{tcolor}{6}
		\Body
		\toprule
		Data                               & Metric  & Own Metric            & Acc                    & gCF                            & True gCF                         \\ \hline
		\multirow{3}{*}{Semi-synthetic D1} & MPVA    & -                     & $0.9259_{\pm 0.0051}$  & $\mathbf{0.0023_{\pm 0.0012}}$ & $\mathbf{0.0010_{\pm 0.0006}} $  \\
		                                   & GCN-RD  & $0.0074_{\pm 0.0074}$ & $0.8743_{\pm 0.0179}$  & $0.4150_{\pm 0.0961}$          & $0.4219_{\pm 0.0969}$            \\
		                                   & GCN-IID & $0.0031_{\pm 0.0015}$ & $0.9024_{\pm 0.0071}$  & $0.1787_{\pm 0.0437}$          & $0.1896_{\pm 0.0456}$            \\ \hline
		\multirow{3}{*}{Semi-synthetic D2} & MPVA    & -                     & $0.9490_{\pm 0.0009}$  & $\mathbf{0.0019_{\pm 0.0018}}$ & $\mathbf{0.0073_{\pm 0.0048 }} $ \\
		                                   & GCN-RD  & $0.0091_{\pm 0.0055}$ & $0.8828_{ \pm 0.0026}$ & $0.1634_{\pm 0.0744}$          & $0.1868_{\pm 0.0900}$            \\
		                                   & GCN-IID & $0.0069_{\pm 0.0010}$ & $0.9094_{\pm 0.0021}$  & $0.6818_{\pm 0.0310}$          & $0.6977_{\pm 0.0292}$            \\ \bottomrule
	\end{NiceTabular}
\end{table*}

\begin{figure*}[h!]
	\centering
	\includegraphics[width=\linewidth]{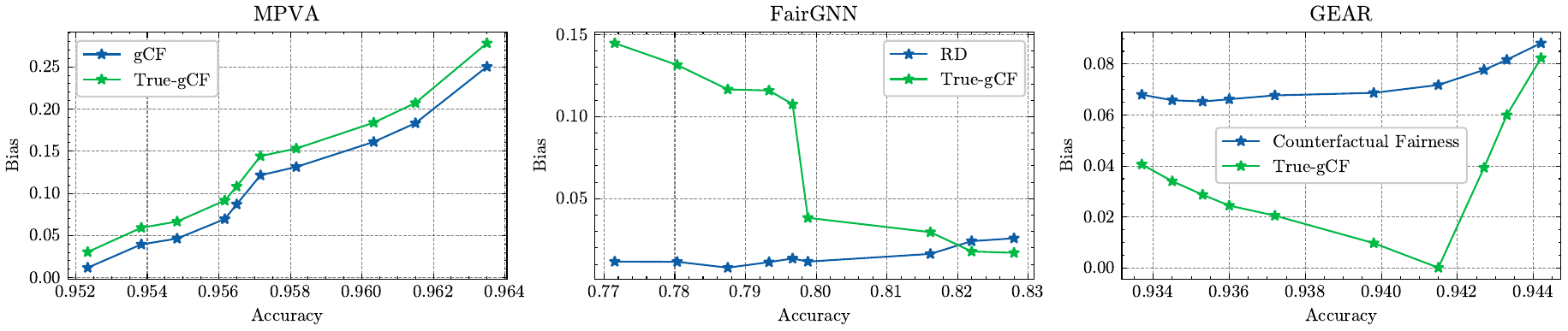}
	\caption{Comparison of measured bias and true gCF bias on D2 with various mitigation methods.}
	\label{fig:d2_bias_comparison}
\end{figure*}

\subsection{Results on Semi-synthetic Data}

We first generate two network structures with different generating parameters for the semi-synthetic datasets.
To show the biases in the generated graphs under the proposed metrics and evaluate the effectiveness of MPVA for estimating gCF,
we train the classic GCN models without any bias mitigation considerations.
For the conventional GCN node classifiers, we measure bias and report the results in Table~\ref{tab:semi_synthetic_gcn_fairness}. The table presents empirical node prediction accuracy, estimated RD, CF, and gCF (highlighted in blue), as well as ground-truth gCF (highlighted in green), which is computed directly by performing interventions on the true causal model.
The node prediction accuracy is high, which indicates that the models are well trained and make accurate predictions.
The estimated gCF (blue) closely matches ground-truth gCF (green), showing that our method accurately estimates causal fairness in graph data. We also observe that RD and CF differ substantially from gCF, which shows that RD and CF cannot be used as proxies for gCF.

\begin{wrapfigure}{r}{0.48\linewidth}
	\centering
	\includegraphics[width=0.95\linewidth]{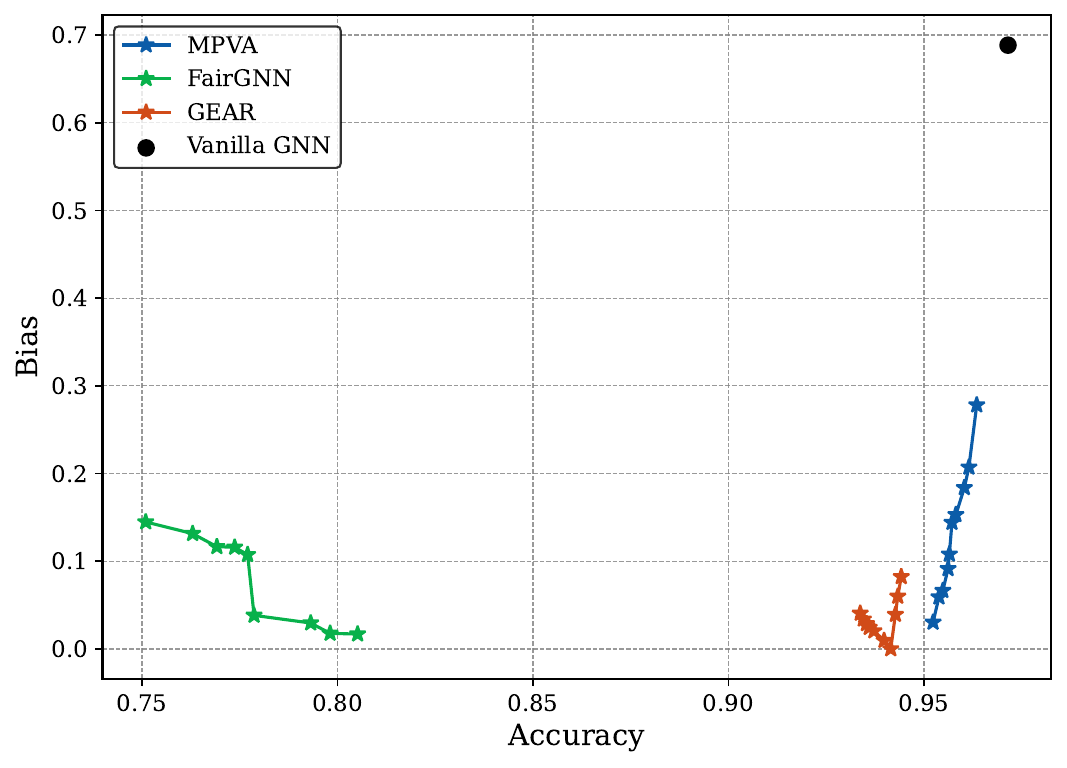}
	\caption{Trade-off between fairness and accuracy for MPVA, FairGNN, and GEAR on the semi-synthetic dataset D2.}
	\label{fig:d2_fairness_accuracy_tradeoff}
\end{wrapfigure}

Next, we build fair node classification models on the generated graph data using the proposed method and baselines.
The performance of classification prediction and fairness is shown in Table~\ref{tab:semi_synthetic_mitigation}.
For a fair comparison, each model is trained with its own fairness metric (i.e., GCN-RD uses RD, and GCN-IID uses CF). As shown in the \textbf{Own Metric} column, all models achieve low bias under their own metrics. For MPVA, the own metric is shown in the gCF column.
We then report the accuracy, estimated gCF, and ground-truth gCF of all methods. Although the baselines, including {GCN-RD} and {GCN-IID}, are fair under their own metrics, they exhibit substantial bias from the \textit{graph-based} causal fairness, i.e., gCF, perspective.
In addition, the baseline methods neglect potential graph-structure effects while addressing bias, which compromises accuracy. In contrast, {MPVA} achieves the best performance in terms of both accuracy and graph-based causal fairness gCF.
We further compare our proposed {MPVA} with the state-of-the-art graph-based bias mitigation algorithms, {FairGNN} and {GEAR}.
{FairGNN} aims to mitigate statistical bias in graph data, while {GEAR} aims to alleviate counterfactual bias in graph data.
For a comprehensive comparison, we evaluate the trade-off between model bias and performance for {MPVA}, {FairGNN}, and {GEAR}.
As shown in Figure~\ref{fig:d2_bias_comparison}, we tune each model multiple times to obtain different bias-accuracy trade-offs. Each subplot reports the corresponding fairness or bias measure used by the models and the true gCF derived from the data-generation process at different accuracy levels.
For {MPVA}, the estimated fairness aligns with true causal fairness gCF at every accuracy level because of our graph causal inference technique.
However, for {FairGNN}, and {GEAR}, the measured fairness is significantly different from the true fairness, implying that they cannot guarantee obtaining a fair model by fine-tuning models to balance the bias-accuracy trade-off.
In addition, we compare the trade-off between fairness and accuracy for each method on the dataset D2, as shown in Figure~\ref{fig:d2_fairness_accuracy_tradeoff}. The figure demonstrates that MPVA achieves the most favorable trade-off, attaining high accuracy while maintaining low gCF bias. In contrast, FairGNN and GEAR struggle to simultaneously achieve both low gCF bias and accuracy, further validating the effectiveness of our causal framework in balancing both objectives.

\subsection{Sensitivity Analysis}

\begin{minipage}[h!]{0.48\linewidth}
	\centering
	\includegraphics[width=0.8\linewidth]{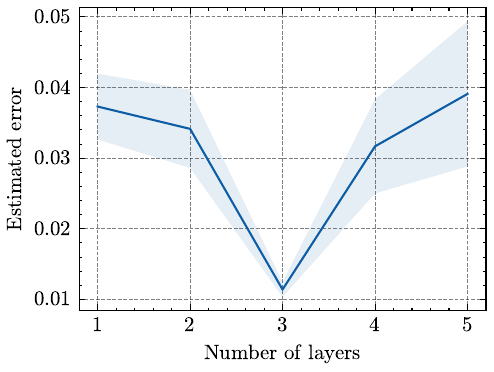}
	\captionof{figure}{The impact of the number of MPNN layers on estimation error.}
	\label{fig:mpnn_layer_sensitivity}
\end{minipage}
\hfill
\begin{minipage}{0.48\linewidth}
	\centering
	\small
	\setlength{\tabcolsep}{10pt}
	\renewcommand{\arraystretch}{1.2}
	\captionof{table}{gCF difference under different dependency strengths.}
	\label{tab:gcf_violation_strength}
	\begin{tabular}{cc}
		\toprule
		\textbf{Violation Strength} & \textbf{gCF Difference} \\
		\midrule
		0.0                         & $0.0075_{\pm 0.0007}$   \\
		0.5                         & $0.0592_{\pm 0.0012}$   \\
		1.0                         & $0.1142_{\pm 0.0016}$   \\
		\bottomrule
	\end{tabular}
\end{minipage}

\subsubsection{Multi-hop Causal Effect Analysis}

We further demonstrate that the proposed {MPVA} framework is capable of capturing multi-hop causal effects in graph data.
We generate the influence of neighbors' sensitive attributes on each node within a three-hop range, based on the dataset D2.
As shown in Figure~\ref{fig:mpnn_layer_sensitivity}, when the {MPNN} module has the same number of layers as the neighborhood hops of the generating model (which is $3$ in Figure~\ref{fig:mpnn_layer_sensitivity}), it achieves the best performance in estimating the interventional distribution. 
This occurs because using too few layers leads to underfitting, while using too many layers results in overfitting. 
We also observe that the variance of the results is minimal when the number of layers matches the neighborhood hops. 
This observation guides the selection of an appropriate number of layers in practice.

\subsubsection{Violation of Proposed Conditions}

Additionally, we simulate a scenario where the proposed conditions are violated using the synthetic dataset D2.
To facilitate the analysis, we introduce a dependency between $U_X$ and $A$ by incorporating a weighted multiplicative interaction (e.g., $\rho\cdot A\cdot U_X$) during the data generation process, with coefficient $\rho$ controlling the violation or dependency strength.
We then measure the absolute difference between our estimated gCF and the True gCF. As shown in Table~\ref{tab:gcf_violation_strength}, the difference increases with violation strength, which indicates that deviations from the proposed conditions affect estimation accuracy.

\begin{table*}[h!]
	\small
	\centering
	\caption{Results of various methods on real-world datasets.}
	\label{tab:real_world_results}
	\begin{NiceTabular}{l|c|cccc}
		\CodeBefore
		\columncolor{hcolor}{6}
		\Body
		\toprule
		Dataset                 & Method  & Acc                            & RD                      & CF                      & gCF                            \\ \hline
		\multirow{6}{*}{Credit} & GCN     & $0.8192_{\pm 0.0005}$          & $0.0195_{\pm 0.0014}$   & $0.0049_{\pm 0.0001}$   & $0.0705_{\pm 0.0123}$          \\
		                        & GCN-RD  & $0.7988_{\pm 0.0062}$          & $0.0057_{\pm 0.0044}$   & $0.0055_{\pm 0.0001}$   & $0.0540_{\pm 0.0145}$          \\
		                        & FairGNN & $0.7930_{\pm 0.0086}$          & ${0.0047_{\pm 0.0012}}$ & $0.0043_{\pm 0.0008}$   & $0.0404_{\pm 0.0251}$          \\
		                        & GCN-IID & $0.8065_{\pm 0.0008}$          & $0.0100_{\pm 0.0023}$   & ${0.0010_{\pm 0.0003}}$ & $0.1360_{\pm 0.0833}$          \\
		                        & NIFTY   & $0.7933_{\pm 0.0146}$          & $0.0543_{\pm 0.0068}$   & $0.0079_{\pm 0.0006}$   & $0.0981_{\pm 0.0165}$          \\
		                        & GEAR    & $\mathbf{0.8075_{\pm 0.0005}}$ & $0.0260_{\pm 0.0108}$   & $0.0055_{\pm 0.0003}$   & $0.0278_{\pm 0.0111}$          \\
		                        & FairINV & $0.7720_{\pm 0.0205}$          & $0.0111_{\pm 0.0139}$   & $0.0087_{\pm 0.0003} $  & $0.0295_{\pm 0.0226}$          \\
		                        & MPVA    & $0.8054_{\pm 0.0033}$          & $0.0142_{\pm 0.0046}$   & $0.0075_{\pm 0.0007}$   & $\mathbf{0.0036_{\pm 0.0033}}$ \\ \hline
		\multirow{6}{*}{German} & GCN     & $0.9758_{\pm 0.0027}$          & $0.0771_{\pm 0.0083}$   & $0.0704_{\pm 0.0004}$   & $0.6080_{\pm 0.2596}$          \\
		                        & GCN-RD  & $0.9608_{\pm 0.0054}$          & ${0.0046_{\pm 0.0023}}$ & $0.0354_{\pm 0.0005}$   & $0.2657_{\pm 0.0600}$          \\
		                        & FairGNN & $0.8183_{\pm 0.0162}$          & $0.0051_{\pm 0.0038}$   & $0.0536_{\pm 0.0012}$   & $0.8263_{\pm 0.0526}$          \\
		                        & GCN-IID & $0.7643_{\pm 0.0118}$          & $0.1719_{\pm 0.0133}$   & ${0.0047_{\pm 0.0011}}$ & $0.2994_{\pm 0.0317}$          \\
		                        & NIFTY   & $0.8113_{\pm 0.0224}$          & $0.0975_{\pm 0.0347}$   & $0.0566_{\pm 0.0011} $  & $0.9987_{\pm 0.0019}$          \\
		                        & GEAR    & $\mathbf{0.9717_{\pm 0.0187}}$ & $0.0689_{\pm 0.0275}$   & $0.0359_{\pm 0.0040}$   & $0.3667_{\pm 0.3769}$          \\
		                        & FairINV & $0.8200_{\pm 0.0433}$          & $0.0428_{\pm 0.0508}$   & $0.0607_{\pm 0.0063}$   & $0.5844_{\pm 0.3669}$          \\
		                        & MPVA    & $0.9283_{\pm 0.0353}$          & $0.1136_{\pm 0.0332}$   & $0.0667_{\pm 0.0014}$   & $\mathbf{0.0030_{\pm 0.0032}}$ \\ \bottomrule
	\end{NiceTabular}
\end{table*}

\subsection{Results on Real Data}
We further conduct extensive experiments on real-world data.
We first train a standard Graph Convolutional Network (GCN) without any bias mitigation method, then run fairness-aware methods over five independent trials.
The results are shown in Table~\ref{tab:real_world_results}.
In the original GCN, the gap between gCF and CF is large, which implies that the IID causal metric does not accurately measure non-IID causal fairness in graphs.
On both datasets, MPVA \textbf{outperforms} all baselines in terms of gCF with a mild accuracy decrease compared to the classic GCN model.
Although other methods achieve fairness under their own metrics, they fail to meet the gCF requirements.
FairGNN neglects causality-based bias, which compromises accuracy when pursuing fairness.
The baseline GEAR achieves good accuracy but fails to eliminate bias effectively. 
For example, on the German dataset, GEAR does not remove the substantial gCF bias.
These results show that existing fairness methods cannot guarantee gCF fairness.
Overall, the real-world results are consistent with the semi-synthetic results and demonstrate the superiority of the proposed method.

\section{Conclusions}

This paper addressed a fundamental challenge in extending causal fairness to graph data: the heterogeneity of causal mechanisms across nodes due to varying neighborhood structures, which violates the invariance assumptions underlying classical structural causal models.
We focused on graph settings where data instances are interconnected and proposed a principled solution based on the Network Structural Causal Model (NSCM) framework.
Our key insight is that, although node-level mechanisms vary with local network structure, it is feasible to construct a structural representation that restores invariance.
We introduced two conditions, Decomposability and Graph Independence, that formalize when interventional distributions can be identified using $do$-calculus in non-IID settings by separating the effect of sensitive attributes from network structural variability.
Building on this theoretical foundation, we developed the Message Passing Variational Autoencoder for Causal Inference (MPVA), which estimates the shared functional component between observational and interventional distributions.
We further integrated MPVA with a causal fairness regularization framework that explicitly minimizes the causal effect of sensitive attributes on predictions through interventional distribution estimation.
This integration yields causally fair node classification in non-IID graph settings.
Empirical evaluations on semi-synthetic and real datasets show that our approach surpasses baseline methods by more accurately approximating interventional distributions and reducing bias.
Sensitivity analysis further shows that MPVA captures multi-hop causal effects and maintains performance under varying conditions.

\section{Limitations and Future Work}
While our proposed method has shown promising results, several limitations warrant discussion, along with potential directions for future work.
Our method assumes that the causal graph structure is known a priori.
In practice, the true causal graph is often unknown and must be inferred from data.
Future work could integrate causal discovery algorithms to automatically learn the underlying causal structure, or develop robust methods that are less sensitive to misspecification of the causal graph.
In addition, the current framework is specifically designed for graph-structured data where dependencies between instances are explicitly modeled through edges.
Extending this approach to other settings, such as tabular data with latent dependencies or temporal data with sequential dependencies, represents an important direction.
This could involve developing analogous principles for decomposability and independence in these alternative settings.
Our current formulation primarily focuses on scenarios with a single sensitive attribute.
Real-world fairness problems often involve multiple intersecting sensitive attributes (e.g., race and gender).
Extending the framework to handle multiple sensitive attributes by leveraging intersectional fairness \citep{DBLP:conf/icde/FouldsIKP20} is an important avenue for future research.

\section*{Acknowledgments}

This work was supported in part by NSF 1910284, 2142725, 2242812, 2520496, and SC EPSCoR 24-GA02.

\bibliographystyle{tmlr}

\newpage
\appendix
\renewcommand{\theHtheorem}{appendix.\arabic{theorem}}
\section*{Appendix}

\section{Proof of Theorem~\ref{thm:main} in the main paper}
For any two variables $X$ and $Y$ in an SCM, let $y(u)$ be the value of $Y$ for an instance whose exogenous variable is $u$, and let $y_x(u)$ be the value of $Y$ under the intervention $x(u)=x$.
According to \cite{pearl2009causality}, we have the following lemmas.

\begin{lemma}
\label{lem:independence}
	Given the causal diagram in Figure~\ref{fig:reduced_causal_diagram} in the main paper, we have that $X_{a} \perp A$ for any $a$.
\end{lemma}

\begin{lemma}
\label{lem:intervention_equivalence}
	Given the causal diagram in Figure~\ref{fig:reduced_causal_diagram} in the main paper, for any node $u$, we have that $x_{\s}(u)=x_{\s,c,a}(u)=x_{a}(u)$ if $a_{\s}(u)=a$ and $c(u)=c$.
\end{lemma}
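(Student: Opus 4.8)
The plan is to read the two equalities off the structural equations encoded by Fig.~6 in the main paper and to justify the one nontrivial step with Pearl's composition axiom. Under the Decomposability assumption the mechanism generating $X$ factors as $a=g(\s,c)$ followed by $x=f^{\textit{INT}}(a,u_X)$, where $u_X$ is the component of the exogenous instantiation $u$ that feeds $X$. The single structural fact that drives the whole argument is that $X$ depends on $\mathsf{S}$ and $C$ \emph{only} through the intermediate variable $A$.

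First I would establish the rightmost equality $x_{\s,c,a}(u)=x_a(u)$, which is immediate from the structural form. Under the intervention $do(A=a)$ the equation for $X$ evaluates to $f^{\textit{INT}}(a,u_X)$, and because this expression contains no $\mathsf{S}$ or $C$ argument, additionally forcing $\mathsf{S}=\s$ and $C=c$ leaves it unchanged; hence $x_{\s,c,a}(u)=f^{\textit{INT}}(a,u_X)=x_a(u)$.

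Next I would handle $x_{\s}(u)=x_a(u)$ under the hypotheses $a_{\s}(u)=a$ and $c(u)=c$. Intervening on $\mathsf{S}$ alone leaves $C$ at its natural value, since in Fig.~6 the link between $\mathsf{S}$ and $C$ is a bidirected confounding arc rather than a directed edge, so $C$ is not a descendant of $\mathsf{S}$ and retains the value $c(u)=c$. Because $a_{\s}(u)=a$ is then the genuine value taken by $A$ under $do(\s)$, the composition axiom (\citep{pearl2009causality}, Ch.~7) gives $x_{\s}(u)=x_{\s,a}(u)$; and $x_{\s,a}(u)=x_a(u)$ by the same structural argument as before, since with $A$ fixed the equation for $X$ no longer reads $\mathsf{S}$. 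Chaining the two results yields $x_{\s}(u)=x_{\s,c,a}(u)=x_a(u)$.

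I expect the principal difficulty to be careful handling of the counterfactual semantics rather than any deep obstruction. One must argue rigorously that fixing $A$ renders $\mathsf{S}$ and $C$ causally irrelevant to $X$ (the exclusion restriction implied by Decomposability) and that $do(\s)$ genuinely leaves $C$ intact because the $\mathsf{S}$--$C$ arc is confounding and not directed, so that $a_{\s}(u)=g(\s,c)$. The hypothesis $c(u)=c$ serves only to tie the three potential outcomes to a mutually consistent scenario for the unit $u$ and to pin down $a_{\s}(u)=g(\s,c)$; once $A$ is fixed at $a$ the value of $C$ no longer enters $X$, so it plays no further substantive role.
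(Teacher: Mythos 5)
Your proof is correct and matches the paper's intended justification: the paper never spells out a proof of this lemma at all, stating Lemmas a--c as direct consequences of Pearl's counterfactual calculus \citep{pearl2009causality}, and your argument---composition to pass from $x_{\s}(u)$ to $x_{\s,a}(u)$ using $a_{\s}(u)=a$ together with the fact that $C$ is not a descendant of $\mathsf{S}$ (the $\mathsf{S}$--$C$ arc is bidirected), then the exclusion restriction from Decomposability to drop the interventions on $\mathsf{S}$ and $C$ once $A$ is held fixed at $a$---is exactly the standard derivation that citation points to. Your side observation that the hypothesis $c(u)=c$ is not load-bearing once $A$ is fixed is also accurate and consistent with how the lemma is used in the paper's proof of Theorem 3.
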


\begin{lemma}
\label{lem:color_consistency}
	Given the causal diagram in Figure~\ref{fig:reduced_causal_diagram} in the main paper, for any node $u$, we have that $a_{\s}(u)=a_{\s,c}(u)$ if $c(u)=c$.
\end{lemma}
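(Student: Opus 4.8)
The plan is to show that the intervention $do(\mathsf{S}=\s)$ leaves $C$ untouched, so that additionally fixing $C$ to the value it would have taken anyway is vacuous for the downstream variable $A$. I would proceed in three short steps. First, I would read off from the causal diagram in Fig.~\ref{fig:ncd_cov} that $C$ is \emph{not} a descendant of $\mathsf{S}$: the only edge joining them is the dashed bidirected arc encoding latent confounding, and there is no directed path $\mathsf{S}\to\cdots\to C$. Consequently the structural equation for $C$ does not take $\mathsf{S}$ as an argument, and the value of a non-descendant is invariant under intervention on $\mathsf{S}$; formally $C_{\s}(u)=c(u)=c$ whenever $c(u)=c$. (This is the standard non-descendant invariance property of interventions, also obtainable via Rule~3 of the $do$-calculus of \citet{pearl2009causality}.)

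Second, I would invoke the composition axiom of \citet{pearl2009causality}: forcing a variable to the value it would have attained on its own has no effect on the rest of the system. Since under $do(\mathsf{S}=\s)$ the variable $C$ already attains the value $c$ for unit $u$ by the first step, additionally imposing $do(C=c)$ is inert, whence $a_{\s,c}(u)=a_{\s}(u)$. As a fully self-contained alternative that bypasses the axiom, I would use the deterministic message-passing map established by Assumption~\ref{ass:1} and Proposition~\ref{pro:1}, namely $A=g(\mathsf{S},C)$: under $do(\mathsf{S}=\s)$ we have $a_{\s}(u)=g(\s,C_{\s}(u))=g(\s,c)$ by the first step, while the joint intervention yields $a_{\s,c}(u)=g(\s,c)$ directly, so the two coincide.

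I expect the only genuinely delicate point to be the first step. The bidirected arc $\mathsf{S}\leftrightarrow C$ may tempt one to think that $\mathsf{S}$ influences $C$, but confounding is not causation, and the whole argument hinges on carefully distinguishing the post-intervention value $C_{\s}(u)$ from the natural value $c(u)$ and justifying that they agree precisely because $C$ is a non-descendant of $\mathsf{S}$. Once that invariance is in place, the conclusion follows either from the composition axiom or from a one-line substitution into the deterministic map $g$, with no further calculation required.
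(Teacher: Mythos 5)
Your proof is correct and takes essentially the same route as the paper, which states this lemma without detailed argument as a direct consequence of Pearl's (2009) structural-counterfactual framework: your two steps---invariance of $C$ under $do(\mathsf{S}=\s)$ because $C$ is a non-descendant of $\mathsf{S}$ (the bidirected arc carries confounding, not directed influence), followed by the composition axiom to render the additional $do(C=c)$ inert---are precisely the standard justification that citation points to. You also correctly isolated the one delicate point, namely upgrading the hypothesis $c(u)=c$ to the post-intervention statement $C_{\s}(u)=c$ before composition (or the substitution into the deterministic map $g$) can be applied.
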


\begin{theorem}
	Given the causal diagram in Figure~\ref{fig:reduced_causal_diagram} in the main paper, we have
	\begin{equation*}
		P(x|do(\s)=s) = \sum_{c} P(c) \sum_{a}P(x|a)P(g(\s,c)=a).
	\end{equation*}
\end{theorem}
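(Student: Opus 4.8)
The plan is to prove the identity by collapsing the intervention $do(\s)=s$ onto a deterministic intervention on the intermediate variable $A$, using the three lemmas above together with the Graph Independence assumption. I write $x_\s(u)$, $a_\s(u)$, $c(u)$ for the values $X$, $A$, $C$ take at exogenous draw $u$ under $do(\s)=s$, and $X_a$, $x_a(u)$ for the corresponding quantities under $do(A=a)$. First I would expand the interventional distribution over the exogenous variables and insert an exhaustive partition over the color $c$ and the post-intervention value $a$ of the intermediate variable:
\[
P(x\mid do(\s)=s)=\sum_{c}\sum_{a}\sum_u P(u)\,\I[c(u)=c]\,\I[a_\s(u)=a]\,\I[x_\s(u)=x].
\]
Inserting $\sum_c\sum_a \I[c(u)=c]\,\I[a_\s(u)=a]=1$ is legitimate because, for each $u$, these events are exhaustive and mutually exclusive.

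Next I would collapse the $\s$-intervention onto $A$ using the two deterministic lemmas. On the event $\{c(u)=c,\ a_\s(u)=a\}$, the chain $x_\s(u)=x_{\s,c,a}(u)=x_a(u)$ (the second lemma) lets me replace $\I[x_\s(u)=x]$ by $\I[x_a(u)=x]$, so that $X$ now responds only to the intervention on $A$. The third lemma, $a_\s(u)=a_{\s,c}(u)$ when $c(u)=c$, together with the determinism of the message-passing map (Proposition~\ref{pro:1}) giving $a_{\s,c}(u)=g(s,c)$, turns the random condition $\I[a_\s(u)=a]$ into the non-random indicator $\I[g(s,c)=a]$. Pulling this factor out and recognizing the remaining sum over $u$ as a joint probability yields
\[
P(x\mid do(\s)=s)=\sum_{c}\sum_{a}\I[g(s,c)=a]\;P(C=c,\,X_a=x).
\]

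It then remains to factor $P(C=c,\,X_a=x)$. Since $X_a=f^{\textit{INT}}(a,U_X)$ is a function of $U_X$ alone, Graph Independence (Assumption~\ref{ass:2}, $U_X\perp C$) gives $X_a\perp C$, so $P(C=c,\,X_a=x)=P(c)\,P(X_a=x)$. Consistency gives $P(X=x\mid A=a)=P(X_a=x\mid A=a)$, and the first lemma $X_a\perp A$ (no $A$–$X$ confounding) removes the conditioning, so $P(X_a=x)=P(x\mid a)$. Identifying the degenerate distribution of the deterministic map, $\I[g(s,c)=a]=P(g(\s,c)=a)$, and reordering the sums gives $P(x\mid do(\s)=s)=\sum_c P(c)\sum_a P(x\mid a)\,P(g(\s,c)=a)$, as claimed.

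The step I expect to be the main obstacle is the factorization of $P(C=c,\,X_a=x)$: it is where both assumptions do the real work, and it requires keeping the two independences logically distinct — $X_a\perp C$ from Graph Independence and $X_a\perp A$ from the first lemma. The potential pitfall is the bidirected edge $\s\leftrightarrow C$, which introduces confounding between $\s$ and $C$; I would verify it cannot open a backdoor from $A$ into $X$, which holds because $A$ lies on the causal path to $X$ and $U_X$ is a root disconnected from $\{\s,C\}$. I would also double-check the nested potential outcomes in the collapse step, since $x_{\s,c,a}$ and $x_a$ are easy to conflate; leaning directly on the second lemma's stated chain avoids re-deriving it. An equivalent route is pure $do$-calculus on Fig.~\ref{fig:ncd_cov}: $P(c\mid do(\s))=P(c)$ by Rule~3 (as $\s\perp C$ in $\G_{\overline{\s}}$), $P(x\mid a,c,do(\s))=P(x\mid a)$ since $A$ blocks $\s,C$ from $X$ and there is no $A$–$X$ backdoor, and $P(a\mid c,do(\s))=P(g(\s,c)=a)$ by determinism of $g$; multiplying and summing reproduces the same formula.
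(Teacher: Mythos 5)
Your proof is correct, and although its skeleton parallels the paper's --- partition over $(c,a)$, collapse $do(\s)=s$ onto $do(A=a)$ via the two deterministic lemmas, then de-intervene $A$ --- the execution is genuinely different: you argue at the unit level over exogenous draws $u$, while the paper argues at the distribution level. The paper expands $P(x|do(\s)=s)$ by the chain rule into $P(x|c,a,do(\s)=s)\,P(c|do(\s)=s)\,P(a|c,do(\s)=s)$, replaces $P(c|do(\s)=s)$ by $P(c)$ on the grounds that $C$ is unaffected by intervening on $\s$, applies the two deterministic lemmas to turn the remaining factors into $P(x|do(a))$ and $P(g(\s,c)=a)$, and then converts $P(x|do(a))$ into $P(x|a)$ by an auxiliary-averaging device: it inserts $\sum_{a'}P(a')\,P(x|a',do(a))$ so that the independence lemma $X_a \perp A$ plus the Composition axiom collapse the sum to $P(x|a)$. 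You reach the same endpoint without that device, by factoring the joint $P(C=c,\,X_a=x)=P(c)\,P(X_a=x)$ directly from Graph Independence ($X_a$ is a function of $U_X$ alone and $U_X\perp C$) and then using $X_a\perp A$ plus consistency to get $P(X_a=x)=P(x|a)$. Your route buys an explicit, localized use of Assumption~\ref{ass:2}, which in the paper is absorbed into the claim that Fig.~\ref{fig:ncd_cov} is a valid Markovian causal diagram (and hence into the lemmas themselves), and it avoids the somewhat opaque sum over $a'$. The paper's route, conversely, makes explicit the one step you leave implicit in notation: writing $c(u)$ with no intervention subscript and later reading off the \emph{observational} $P(c)$ is legitimate precisely because $C$ is a non-descendant of $\s$ in Fig.~\ref{fig:ncd_cov}, so its value is invariant under $do(\s)=s$; add one sentence to that effect. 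Your closing $do$-calculus sketch is, in compressed form, essentially the paper's actual proof.
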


\begin{proof}

	According to the formula of the conditional probability, we directly have

	\begin{equation*}
		\begin{split}
			 & P(x|do(\s)=s) = \sum_{c,a}
			P(x|c,a,do(\s)=s)P(c,a|do(\s)=s)                               \\
			 & = \sum_{c,a} P(x|c,a,do(\s)=s)P(c|do(\s)=s)P(a|c,do(\s)=s).
		\end{split}
	\end{equation*}

	Since $\mathsf{S}$ is not a descendant of $C$ in the causal diagram, it follows that

	\begin{equation*}
		\begin{split}
			 & P(x|do(\s)=s) = \sum_{c,a}
			P(x|c,a,do(\s)=s)P(c)P(a|c,do(\s)=s).
		\end{split}
	\end{equation*}

	According to Lemma~\ref{lem:color_consistency}, we have $P(a|c,do(\s)=s) = P(a|do(c),do(\s)=s)$
	which can be rewritten as $P(g(\s,c)=a)$ below using the mapping $g$.
	By similarly applying Lemma~\ref{lem:intervention_equivalence}, we have $P(x|c,a,do(\s)=s)=P(x|do(c),do(a),do(\s)=s)=P(x|do(a))$.
	As a result, we have
	\begin{equation*}
		\begin{split}
			 & P(x|do(\s)=s) = \sum_{c,a}P(x|do(a))P(c)P(g(\s,c)=a).
		\end{split}
	\end{equation*}

	Then, we rewrite the above equation as
	\begin{equation*}
		\begin{split}
			 & P(x|do(\s)=s) = \sum_{c,a}\sum_{a'}P(x|a',do(a))P(a') P(c) P(g(\s,c)=a)
		\end{split}
	\end{equation*}
	According to Lemma~\ref{lem:independence}, we have $P(x|a',do(a))=P(x|a,do(a))$, which is equal to $P(x|a)$ according to the Composition Axiom.
	Finally, we have that
	\begin{equation*}
		\begin{split}
			 & P(x|do(\s)=s) = \sum_{c,a}
			P(x|a)\sum_{a'}P(a') P(c) P(g(\s,c)=a)         \\
			 & = \sum_{c} P(c) \sum_{a}P(x|a)P(g(\s,c)=a).
		\end{split}
	\end{equation*}

	Hence, the theorem is proved.
\end{proof}

\section{Discussion of Proposed Conditions}

\subsection{Decomposability}

The Decomposability condition states that when a node's outcome is influenced by both its own attributes and its neighbors' attributes, the influence can be decomposed into two steps: aggregating information from neighbors and then combining it with the node's own attributes.
This condition makes causal inference tractable in graph settings with a structured model of information flow through the network.
Under this condition, we can decompose the causal effect of neighbors into the message-passing mechanism, denoted by $f^{\textit{MP}}(\cdot)$, and the internal causal mechanism within the node, denoted by $f^{\textit{INT}}(\cdot)$.

For example, consider a social media platform that uses an algorithm to deliver purchase discounts to users.
We posit that a user may choose to subscribe to the supplier (i.e., $X$), and if they decide to subscribe to the supplier, the chance of receiving the discount will increase.
Due to the connections in the social network, we posit that a user's decision to subscribe is influenced by their own situation (i.e., $S$) as well as their neighbors.
Decomposability posits that the user first aggregates information from all neighbors and then combines it with their own situation when making the decision.
This two-step process allows us to separately model the network effects and individual effects while still capturing their joint influence on the final outcome.

\subsection{Graph Independence}

The Graph Independence condition posits that the graph structure (captured by the node color $C$) is independent of the exogenous variable (denoted as $U_X$), which ensures that all relevant information from the neighborhood regarding the interventional value of $X$ can be effectively summarized in the intermediate variable $A$.
Together, Proposition~\ref{prop:shared_aggregation} and Condition~\ref{cond:graph_independence} imply that we can convert the networked causal diagram in Figure~\ref{fig:node_classification_diagram} (from the main paper) to an equivalent causal diagram as shown in Figure~\ref{fig:reduced_causal_diagram} (from the main paper).

In the example of a social network where users are connected based on shared interests or demographics, the Graph Independence condition means that the network connections themselves (who is friends with whom) are not influenced by exogenous factors that also affect the non-sensitive attributes.
This condition allows us to treat the network structure as fixed and unaffected by interventions, thereby preventing cyclic dependencies in the causal diagram (i.e., Figure~\ref{fig:reduced_causal_diagram}). In other words, when we intervene on a node's attributes, we posit that this intervention does not alter the underlying network topology.

\section{Experiments}
\label{sec:experiments}

\subsection{Datasets}
\label{sec:dataset}

For the semi-synthetic datasets, we leverage the Credit Dataset~\citep{misc_default_of_credit_card_clients_350}.
To obtain full control over the data generation process, we define the data generation mechanism as follows.
Denoting the original features in the dataset as $O$, we first build a classifier $f_s: \mathcal{O} \rightarrow [0,1]$ to predict the sensitive attribute $S$ from $O$ and use its outputs to specify Bernoulli draws of $s^{(i)}$ at each node $i$.
We similarly build a classifier $f_y: \mathcal{X} \rightarrow \mathcal{Y}$ to model $y^{(i)}$ from $x^{(i)}$.
We then randomly initialize a GNN $g(\cdot)$ to mimic the influence of neighbors' sensitive attributes on each node's non-sensitive attributes $X$.
We generate our semi-synthetic dataset as follows:
\begin{equation*}
	\begin{split}
		 & s^{(i)} \sim \operatorname{Bernoulli}(p) \\
		 & x^{(i)}=g(\s^{(i)})+O^{(i)}+\xi^{(i)}    \\
		 & y^{(i)}=f_y(x^{(i)})
	\end{split}
\end{equation*}
where $p=f_s(O^{(i)})$ is the conditional probability of $s^{(i)}=1$, $\xi^{(i)}$ is Gaussian noise representing the exogenous disturbance $u_X^{(i)}$, and $g(\s^{(i)})$ is the neighborhood shift returned by the GNN from the sensitive-attribute inputs $\s^{(i)}$.
Crucially, the draws of $\xi^{(i)}$ and the randomly initialized GNN $g(\cdot)$ (which specifies the neighborhood influence structure) are generated independently.
This independence ensures that the Graph Independence condition (Condition~\ref{cond:graph_independence}) holds.
We generate ground truth interventional data under $do(\s=1)$ and $do(\s=0)$ to obtain positive and negative interventional distributions, respectively.

For the real-world graphs, we conduct experiments on widely used real-world datasets, namely Credit Defaulter \citep{misc_default_of_credit_card_clients_350} and German \citep{german_credit_data}.
The dataset details are as follows.

\textbf{Credit Defaulter}: This dataset contains 30,000 instances representing credit card users, and graph connections/edges are formed based on the similarity of payment information (a subset of features in this dataset).
The task is to predict default payment with the sensitive attribute ``Sex''.
We treat ``Education'', ``Marriage'', and ``Age'' as $Z$ (variables other than $S$ that directly affect $X$), and the remaining features as $X$.

\textbf{German}: This dataset contains 1,000 instances representing clients of a German bank, and graph connections/edges are formed based on the similarity of credit accounts (a subset of features in this dataset).
The task is to categorize clients as good or bad credit risks, with gender as the sensitive attribute.
We treat ``YearsAtCurrentJob'' and ``JobClassIsSkilled'' as $Z$ (variables other than $S$ that directly affect $X$), and the remaining features as $X$.
 
For both real-world datasets, the Graph Independence condition (Condition~\ref{cond:graph_independence}) holds because the graph structure is constructed using a separate subset of features, ensuring that it is independent of the exogenous variable $U_X$ associated with the features used for prediction.

\subsection{Implementation}
\label{sec:implementation}

We use a one-layer message-passing neural network to aggregate the sensitive causal effect of one-hop neighbors.
We train the MPNN with a learning rate of 0.01 for 500 epochs.
All models are implemented using PyTorch 1.12.0 and PyG 2.4.0 and evaluated on a Linux server with an Intel(R) Core(TM) i9-10900X CPU and an NVIDIA GeForce RTX 3070 GPU.
The memory consumption is approximately 2000 MiB.
We use the cVAE to reconstruct features conditional on $\hat{a}$ and $c$.
For cVAE training, the learning rate is 0.01, and the number of epochs is 800.
Experimental results are averaged over five repeated executions.
We use the Adam optimizer for both components of our proposed framework and implement our method with PyTorch.
For the constraint-based method, we train a multilayer perceptron (MLP) with corresponding fairness regularization terms to achieve fairness.

\noindent{\bf Risk difference on IID data:}
Risk difference refers to the difference in positive prediction rates between the favorable group and the unfavorable group.
The probability of an output given a sensitive attribute is $P(y \mid s)= \E_{x \mid s}P(y \mid x)$.
We then define the regularization term as $P(y \mid s^+) - P(y \mid s^-)$.

\noindent{\bf Causal inference on IID data:}
For IID data, we use a structural causal model to describe the causal relationship between two variables.
For example, the causal relationship between $S$ and $X$ is given by:
\begin{equation*}
	x = f(s, u).
\end{equation*}
\begin{figure}
	\centering
	\begin{tikzpicture}[scale=1.6]
		\tikzstyle{vertex} = [thick, circle, draw, inner sep=0pt, minimum size = 7mm]
		\tikzstyle{edge} = [thick, -latex]
		\node[vertex] at (0,0) (S) {$S$};
		\node[vertex] at (0.5,0.75) (Z) {$Z$};
		\node[vertex] at (1,0) (X) {$X$};
		\node[vertex] at (2,0) (Y) {$Y$};
		\draw[edge] (Z) to (X);
		\draw[edge] (Z) to (S);
		\draw[edge] (X) to (Y);
		\draw[edge] (S) to (X);
	\end{tikzpicture}
	\caption{The networked causal diagram for node classification.}
	\label{fig:node_classification_diagram_app}
\end{figure}
We consider the same causal structure in Figure~\ref{fig:node_classification_diagram_app} but neglect the network causal effect.
To compute the probability of an output under an intervention on the sensitive attribute, $P(y|do(s))$, we use

\begin{align}
	 & P(y \mid do(s))=\sum_{z, x}
	P(z) P(x \mid s, z) P(y \mid x) \nonumber                                                 \\
	 & =\sum_{z, x} P(z \mid s) \frac{P(s)}{P(s \mid z)} P(x \mid s, z) P(y \mid x) \nonumber \\
	 & =\sum_{z, x} P(z, x \mid s) \frac{P(s)}{P(s \mid z)} P(y \mid x) \nonumber             \\
	 & =\mathbb{E}_{z, x \sim P(z, x \mid s)}\left[\frac{P(s)}{P(s \mid z)} P(y|x)\right]
	\label{eq:iid_interventional_outcome}
\end{align}
We then define the regularization term as $P(y \mid do(s^+)) - P(y \mid do(s^-))$.

\begin{wraptable}{r}{0.4\linewidth}
	\centering
	\small
	\caption{Runtime and memory usage comparison of different methods.}
	\label{tab:runtime_memory}
	\begin{tabular}{lcc}
		\toprule
		\textbf{Method} & \textbf{Time (s)} & \textbf{Memory (MB)} \\
		\midrule
		FairGNN         & 81.7968           & 3639                 \\
		NIFTY           & 17.0893           & 2419                 \\
		GEAR            & $> 3600$          & --                   \\
		FairINV         & 63.7923           & 2103                 \\
		MPVA            & 29.2358           & 1365                 \\
		\bottomrule
	\end{tabular}
\end{wraptable}

\subsection{Computational Cost}
To compare computational costs, we measured the runtime and memory usage of our method and the baselines on the Credit dataset.
As shown in Table~\ref{tab:runtime_memory}, MPVA achieves the lowest memory usage (1365 MB) and has the second-fastest runtime (29.2 seconds), after NIFTY (17.0893 seconds), demonstrating its computational efficiency.
The superior memory efficiency of MPVA makes it particularly suitable for large-scale graph applications where memory constraints are critical.
Notably, GEAR exceeded the 3600-second time limit and could not complete the experiment, highlighting the scalability challenges of existing methods.

\end{document}